\newtheorem{lemma}{Lemma}
\def\ceil#1{\lceil #1 \rceil}
\def\round#1{\lfloor #1 \rceil}
\begin{document}

\title{Continual Learning via Bit-Level Information Preserving}

\author{Yujun Shi\textsuperscript{\rm 1}\thanks{Code: \href{https://github.com/Yujun-Shi/BLIP}{https://github.com/Yujun-Shi/BLIP}}
\quad
Li Yuan\textsuperscript{\rm 1}\thanks{Corresponding author.}
\quad
Yunpeng Chen\textsuperscript{\rm 2}
\quad
Jiashi Feng\textsuperscript{\rm 1}\\

\textsuperscript{\rm 1}National University of Singapore \quad \textsuperscript{\rm 2} YITU Technology\\
{\tt\small \{shi.yujun,yuanli\}@u.nus.edu}
\quad
{\tt\small yunpeng.chen@yitu-inc.com}
\quad
{\tt\small elefjia@nus.edu.sg}
}

\maketitle
\thispagestyle{empty}
\begin{abstract}

Continual learning tackles the setting of learning different tasks sequentially. Despite the lots of previous solutions, most of them still suffer significant forgetting or expensive memory cost. In this work, targeted at these problems, we first study the continual learning process through the lens of information theory and observe that forgetting of a model stems from the loss of \emph{information gain} on its parameters from the previous tasks when learning a new task. From this viewpoint, we then propose a novel continual learning approach called Bit-Level Information Preserving (BLIP) that preserves the information gain on model parameters through updating the parameters at the bit level, which can be conveniently implemented with parameter quantization. More specifically, BLIP first trains a neural network with weight quantization on the new incoming task and then estimates information gain on each parameter provided by the task data to determine the bits to be frozen to prevent forgetting. We conduct extensive experiments ranging from classification tasks to reinforcement learning tasks, and the results show that our method produces better or on par results comparing to previous state-of-the-arts. Indeed, BLIP achieves close to zero forgetting while only requiring constant memory overheads throughout continual learning.
\end{abstract}

\section{Introduction}
Continual learning tackles the setting where an agent learns different tasks sequentially.
It is challenging since the agent is usually not allowed to refer to the previously learned tasks when learning a new one.
Current artificial neural networks generally fail as they tend to suffer severe performance degradation on previously learned tasks after learning new ones, which is known as \emph{catastrophic forgetting} \cite{mccloskey1989catastrophic,french1999catastrophic}.
A commonly acknowledged reason for this problem is that model parameters drift when fitting the new incoming task data.

\begin{figure*}
  \centering
  \includegraphics[width=16.5cm]{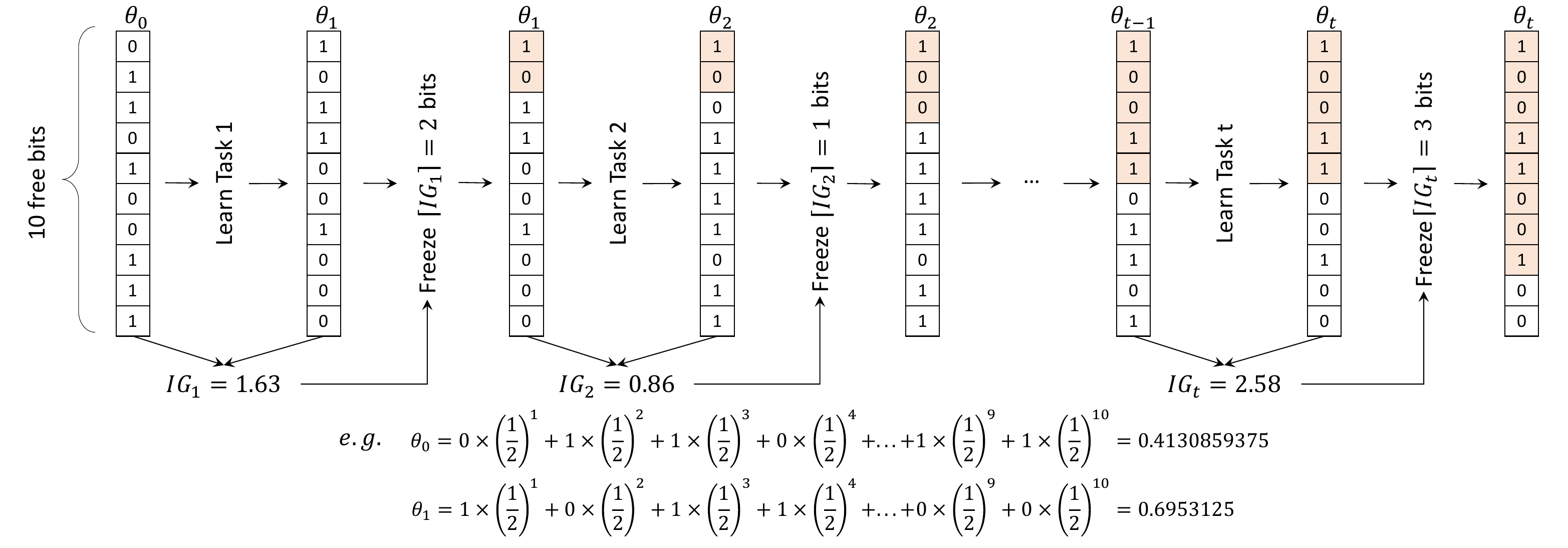}
  \caption{\textbf{Workflow of BLIP. Best viewed in color.}
  We consider a simple scenario with one single parameter $\theta$ quantized to 10 bits to illustrate our method.
  $\theta_{t}$ denotes the parameter after learning on task $1$ to $t$,
  and $\theta_{0}$ is a randomly initialized value before training on any task.
  $IG_{t}$ denotes information gain on $\theta$ after learning the task $t$.
  Bit representation of $\theta$ after learning each task is shown below.
  From the higher bit positions to lower ones is more significant bits to less significant ones.
  Frozen bits are filled with color and the rest bits are free bits.
  After learning each task, the information gain is calculated and then $\ceil{IG_{t}}$ bits are to be frozen in the bit representation.
 By repeating this process, the information on previous tasks can be preserved, enabling continual learning for neural networks.
 }
  \label{figure1}
  \vspace{-2mm}
\end{figure*}

Two types of methods have been developed to address this hazard.
The first type of methods, which are based on model pruning technique~\cite{fernando2017pathnet,serra2018overcoming,mallya2018packnet,mallya2018piggyback}, first identify model parameters important for a task and then store a task-specific mask to mark these parameters and prohibit them from changing in subsequent learning to prevent forgetting.
During the evaluation stage, only the parameters marked by the task-specific mask are applied on the given task.
While these methods usually enjoy relatively low forgetting rate, they suffer from the drawback of linearly growing memory overheads resulted from task-specific masks as continual learning proceeds.
The other type, known as ``regularization-based methods''~\cite{kirkpatrick2017overcoming,huszar2017quadratic,zenke2017continual,nguyen2017variational,lee2017overcoming}, impose model regularization terms when training on the new task to prevent model parameters from deviating away from previously learned ones and thus alleviate forgetting issue.
They do not involve any task-specific masks but usually suffer more severe forgetting comparing to pruning-based ones.
The hard trade-off between forgetting prevention and memory efficiency for these two types of methods is because they only consider preventing forgetting on the parameter-level.

In this paper, we dive into the finer granularity of bit-level instead of parameter-level to investigate and address the forgetting problem in continual learning.
To start with, we study continual learning from information theory perspective, which is developed from the Bayesian inference framework that interprets supervised learning as an inference process over the model parameters from the training data.
For a continual learning (inference) process, where data seen by model accumulate as the model experiences more tasks, the inference on each model parameter should become increasingly certain.
To quantify this expected increase of certainty on model parameters given the streaming data, we exploit \textit{information gain}, which corresponds to the reduction of entropy on parameters estimated after experiencing new task data.
In this way, we consider continual learning as a recursive process of gaining information on model parameters, driven by the sequentially incoming task data.
From this viewpoint, forgetting can be understood as loss of information provided by previous task data on model parameters.

To more intuitively study information gain in continual learning context, we quantize model parameters (\eg to 20 bits) and view them in their bit representation, where values are represented by series of bits.
Initially, before learning on any tasks, a model parameter is free to update, which corresponds to all bits of the parameter being free-to-flip.
Shannon entropy, which is equivalent to the number of free-to-flip bits, is 20 at this point.
Next, after training on the first incoming task, information gain brought by the first task data, which is the reduction of Shannon entropy after learning the first task, thus corresponds to how many bits are now becoming certain.\footnote{As will be explained in Sec.~\ref{bit_fixing}, bits becoming certain from more significant bits (with higher bit positions) to less significant ones (with lower bit positions).}
Leaving these bits to continue to flip in the subsequent learning process means discarding the information provided by this task, and forgetting thus happens.
This motivates us to freeze these certain bits and prohibiting them from flipping in subsequent task iterations to preserve information gain provided by the first task.
By applying the same information gain-guided bit freezing after learning each subsequent task, information provided by each task can be preserved and forgetting can thus be prevented in continual learning.

We accordingly develop a Bit-Level Information Preserving (BLIP) method to tackle the forgetting issue in continual learning.
Given an incoming task, BLIP first trains a weight quantized neural network.
Then, BLIP estimates the information gain on the parameters brought by this task's data, which suggests how many bits to freeze.
The frozen bits are not allowed to flip in subsequent learning, preserving the information provided by this task.
This process is applied recursively with each new task to enable continual learning without forgetting.
We provide a simple overview of its workflow in Fig.~\ref{figure1}.

Unlike previous pruning-based methods, the memory overheads of our method is constant as BLIP only keeps track of how many bits to freeze in total for each parameter, without requiring any task-specific masks.
On the other hand, our method is a more precise and stronger way of regularizing parameters compared to previous regularization-based methods, which effectively prevents forgetting.
We validate our method across diverse settings of classification tasks and reinforcement learning tasks, and well prove that it performs on par with or better than previous state-of-the-arts.

This work makes following contributions: 
\begin{enumerate}[1)]
  \item We study continual learning through the lens of information theory and provide a new perspective on forgetting from information gain.
  \item We propose a novel approach called Bit-Level Information Preserving (BLIP), which quantizes model parameters and directly preserves information gain on model parameters brought by previously learned tasks via bit freezing. Comparing to prior works, our method enjoys advantages of both low forgetting rate and reasonable memory overheads.
  \item We conduct extensive experiments ranging from classification tasks to reinforcement learning tasks to demonstrate the effectiveness of our method.
  \item To the best of our knowledge, our work is the first to explore weight quantization for continual learning.
\end{enumerate}

\section{Related Work}
\textbf{Pruning Based Continual Learning}
These methods rely on network pruning to preserve the previously learned knowledge~\cite{fernando2017pathnet,serra2018overcoming,mallya2018packnet,mallya2018piggyback,ebrahimi2019uncertainty}.
In the training stage, these methods divide their model parameters into two sets: frozen set and free set. Only free parameters can change to adapt to learn the incoming task while frozen parameters stay unchanged in order to protect the previously learned knowledge.
After learning on a task, pruning is applied to free parameters to identify parameters that are important for this task.
A task-specific mask on model parameters is then saved to mark the parameters frozen after learning this task.
During the evaluation stage, choosing which parameter to use is conditioned on the given task and task-specific masks.

\textbf{Regularization Based Continual Learning}
These methods are mostly based on a Bayesian Inference framework
\cite{kirkpatrick2017overcoming,zenke2017continual,nguyen2017variational,lee2017overcoming,ebrahimi2019uncertainty,ahn2019uncertainty}.
Under this framework, posterior distribution on model parameters after learning all previous tasks is viewed as prior when learning a new task. 
From this point of view, knowledge preserving can be achieved by regularizing parameter posterior of the new task to avoid deviating drastically from the prior (which is the posterior given previous tasks).
This is normally achieved by adding penalty terms in the optimization objective.

\textbf{Replay Based Continual Learning}
These methods save representative examples of previous tasks in a replay buffer of limited size \cite{lopez2017gradient,chaudhry2018efficient,isele2018selective, rolnick2019experience,zhang2019variational,ebrahimi2020remembering} or train a generative model to generate samples of previous tasks \cite{shin2017continual,van2018generative,mundt2019unified,xiang2019incremental}.
During training of a new incoming task, data in replay buffer or produced by the trained generative model are then used to constrain the model to perform consistently on previous tasks. 

\textbf{Dynamic Architecture Based Continual Learning}
These methods \cite{aljundi2017expert,yoon2017lifelong,li2019learn} normally enlarge the model dynamically to adapt new incoming tasks. During inference, different model components are applied conditioned on the given or inferred task identity.

\textbf{Parameter Quantization} 
The parameter quantization technique has been widely applied to accelerating the deep neural networks \cite{bengio2013estimating,zhou2016dorefa}.
These works usually apply aggresive quantization on model parameters (\eg 1/2/4 bits) for efficiency consideration, which causes performance degradation on models.
However, in our work, we quantize parameters simply to view them in their bit representations and thus develop our method.
Therefore, we only slightly quantized the parameter (\eg to 20 bits).

\section{Preliminaries}
\textbf{Problem Setup and Assumptions}
We consider learning a total of $T$ tasks sequentially with a deep neural network model.
Data of these $T$ tasks are denoted by $\{\mathcal{D}_{1},\mathcal{D}_{2},...,\mathcal{D}_{T}\}$ with  $\mathcal{D}_{t}=\{\mathcal{X}_{t},\mathcal{Y}_{t}\}$ for $t \in \{1,2,...,T\}$. Here $\mathcal{X}_{t}$ denotes the set of raw data and $\mathcal{Y}_{t}$ denotes the corresponding label set. Throughout the continual learning process, when the model is learning task $t$, it is not allowed to refer to $\mathcal{D}_{1},\ldots, \mathcal{D}_{t-1}$.
In order to estimate the information gain on model parameters provided by each task's data, we adopt the Bayesian inference framework, which interprets model parameters as variables to be inferred given the training data.

We assume posteriors on all model parameters are mutually independent Gaussian distributed similar to previous works \cite{kirkpatrick2017overcoming,huszar2017quadratic}.
Therefore, without loss of generality, we study a single scalar model parameter $\theta$ in the following to illustrate our method.

\textbf{Notations and Definitions}
We denote the posterior on the model parameter $\theta$ after sequentially learning on task $1$ to $t$ as $p(\theta|{\mathcal{D}_{1},\mathcal{D}_{2},\ldots,\mathcal{D}_{t}})$ with the shorthand being $p(\theta_{0:t})$.
The value of $\theta$ after learning on tasks $1$ to $t$ is denoted as $\theta_{0:t}^{*}$.
Loss on $\mathcal{D}_{t}$ when learning on task $t$ is denoted as $l(D_{t},\theta)$.
The model prediction output on a data sample $x$ given the model parameter $\theta$ is denoted as $p_{\theta}(x)$, and $p_{\theta}(y|x)$ denotes the prediction probability of $x$'s label being $y$.
Since we rely on Fisher information of $\theta$ to estimate information gain and develop our method, we denote Fisher information estimated over $\mathcal{D}_{t}$ as $F_{t}$ and the one over $\mathcal{D}_{1} \ldots \mathcal{D}_{t}$ as $F_{0:t}$\footnote{$F_{0}$ corresponds to prior of Fisher information before learning any task, which is a hyper-parameter in our method.}. 
We denote the quantization function by $Q$.
The result of quantizing $\theta$ to $N$ bits is denoted by $Q(\theta,N)$.
This means $Q(\theta,N)$ has a total of $N$ bits in its bit representation.
Correspondingly, the Shannon entropy on $Q(\theta,N)$ is defined as
\begin{equation}
\label{Shannon}
   H(Q(\theta,N)) = -\sum_{i=1}^{2^{N}} P(Q(\theta,N)=q_{i}) \log_{2} P(Q(\theta,N)=q_{i}),    
\end{equation}
where $q_{i}$ denotes $Q(\theta,N)$'s $i$-th possible value.

\section{Bit-Level Information Preserving}
\label{method_dev}
\subsection{Motivation and Method Overview}
Continual learning can be regarded as a continual inference process over model parameters given streaming task data under Bayesian inference framework.
As data seen by the model accumulates, the inference certainty on model parameters shall gradually increase.
In order to quantify such increase of certainty, we introduce the concept of \textit{information gain}, which amounts to the reduction of entropy.

To better understand the intuition of information gain in the context of continual learning, we study a model parameter $\theta$ quantized to $N$ bits via $Q$, and view it in its bit representation, whose value is represented by a series of binary bits.
Before learning any tasks, $\theta$ is free to update, which is equivalent to all $N$ bits of $Q(\theta,N)$ being free-to-flip.
Approximately speaking, each free-to-flip bit in $Q(\theta,N)$ flips with an equal probability, and the Shannon entropy can thus correspond to the total number of freely flipped bits, which is $N$ in this case.

After learning an incoming task, the posterior on $\theta$ becomes more peaky and concentrated, which corresponds to some bits of $Q(\theta,N)$ becoming certain and being not able to flip.
As will be elaborated, information provided by this task's data on $Q(\theta,N)$ immediately indicates how many bits of $Q(\theta,N)$ become certain from more significant bits to less significant bits.
To preserve information provided by this task and thus prevent forgetting, a natural solution is to freeze these certain bits and not allow them to flip.

Motivated by this intuition of information gain, we develop our Bit-Level Information Preserving (BLIP) method, which iteratively conducts information gain-guided bit freezing to prevent forgetting in the continual learning. 
Specifically, without loss of generality, we consider learning the $t$-th task to illustrate our method.
We denote $n_{t}$ as the number of bits frozen after learning the $t$-th task and $n_{0:t}=\sum_{i=1}^{i=t}n_{i}$ is the total number of bits frozen after all previous $t$ tasks.
Before learning on task $t$, $\theta_{0:t-1}^{*}$ and $n_{0:t-1}$ are saved to do bit freezing, and $F_{0:t-1}$ is saved to help estimate information gain provided by task $t$ data.

To learn task $t$, the proposed method quantizes the neural network weights and trains it on $\mathcal{D}_{t}$.
Throughout the training process, the first $n_{0:t-1}$ bits of $Q(\theta,N)$ are frozen by BLIP to prevent forgetting on previous tasks and the rest $N-n_{0:t-1}$ bits can flip to adapt on $\mathcal{D}_{t}$.
After the training is completed, BLIP estimates information gain provided by $\mathcal{D}_{t}$, which implies how many additional bits become certain.
Finally, bits that become certain after learning $\mathcal{D}_{t}$ are frozen in the subsequent learning to  prevent forgetting on task $t$.

We further elaborate on the details of above three steps in the following subsections.

\subsection{Training on Task $t$}
\label{train_w_quant}
To start with, we describe our model parameter quantization scheme and explain how to optimize $\theta$ on $\mathcal{D}_{t}$ while keeping $Q(\theta,N)$'s first $n_{0:t-1}$ bits frozen.

Our quantization function $Q$ is defined as follows:
\begin{equation}
\label{quantize_func}
  Q(\theta, N)=\frac{\round{(2^{N} \min(\max(\theta, -1+\frac{1}{2^{N+1}}), 1-\frac{1}{2^{N+1}}))}}{2^N},
\end{equation}
where $\round{\cdot}$ rounds the number to its nearest integer.
For the quantized model, we rely on Straight Through Estimator (STE) \cite{bengio2013estimating}, which approximates the gradient of the quantization function $Q$ by $\frac{\partial Q(\theta,N)}{\partial \theta} = 1$, to optimize the model parameter $\theta$.
Therefore, $\theta$ is directly updated with the gradient descent by
\begin{equation}
  \theta := \theta - \alpha \frac{\partial l(\mathcal{D}_{t},\theta)}{\partial Q(\theta, N)},
\end{equation}
where $\alpha$ is the learning rate.

Next, to keep the first $n_{0:t-1}$ bits of $Q(\theta,N)$ unchanged,
we clip the value of $\theta$ after the above gradient descent updates.
Specifically, based on the definition of $Q$ in~Eqn.~\eqref{quantize_func}, keeping $Q(\theta,N)$'s first $n_{0:t-1}$ bits frozen is equivalent to restricting $\theta$ to be within an interval centered around $c_{t-1}=Q(\theta_{0:t-1}^{*},n_{0:t-1})$ with the radius of $\frac{1}{2^{n_{0:t-1}}}$.

Therefore, the first $n_{0:t-1}$ can be kept frozen by the following clipping operation:
\begin{equation}
\label{clipping}
    \theta := \min(\max(\theta,c_{t-1}-\frac{1}{2^{n_{0:t-1}}}), c_{t-1}+\frac{1}{2^{n_{0:t-1}}}).
\end{equation}

The above gradient descent and clipping steps are applied repeatedly until the training loss $l$ converges on $\mathcal{D}_{t}$.

\subsection{Estimating Information Gain}
\label{info_gain}
After training on $\mathcal{D}_{t}$, we estimate the Information Gain (IG) on $Q(\theta,N)$ provided by $\mathcal{D}_{t}$, which is the reduction of the Shannon entropy on $Q(\theta,N)$ after training on $\mathcal{D}_{t}$:
\begin{equation}
\label{discrete_ig}
  IG(Q(\theta,N),\mathcal{D}_{t}) = H(Q(\theta_{0:t-1},N)) - H(Q(\theta_{0:t},N)).
\end{equation}
According to this definition and the definition of the Shannon entropy Eqn.~\eqref{Shannon}, estimating information gain requires the knowledge of $p(\theta_{0:t-1})$ and $p(\theta_{0:t})$, which are the posterior on $\theta$ before and after learning on $\mathcal{D}_{t}$ respectively.

Unfortunately, the posterior on $\theta$ is intractable and has to be approximated by Laplace's approximation \cite{mackay2003information}.
This approximation states that given a dataset $\mathcal{D}=\{\mathcal{X},\mathcal{Y}\}$, $p(\theta|\mathcal{D})$ can be approximated by $\mathcal{N}(\theta_{\mathcal{D}}^{*}, (m F_{\mathcal{D}}(\theta_{\mathcal{D}}^{*}))^{-\frac{1}{2}})$, where $\theta_{\mathcal{D}}^{*}$ is $\theta$'s value after learning on $D$, $m$ is the number of samples used to estimate Fisher information, and $F_{\mathcal{D}}(\theta)$ is \textit{Fisher information} estimated over $\mathcal{D}$:
\begin{equation}
\label{fisher_def}
    F_{\mathcal{D}}(\theta) = \mathbb{E}_{x \sim \mathcal{X}, y \sim p_{\theta}(x)} \left[\left(\frac{\partial \ln p_{\theta}(y|x)}{\partial \theta}\right)^2\right].
\end{equation}

By assuming we use the same number of samples to estimate Fisher information for each task, $p(\theta_{0:t-1})$ and $p(\theta_{0:t})$ can be approximated by $\mathcal{N}(\theta^{*}_{0:t-1},(mt F_{0:t-1})^{-\frac{1}{2}})$ and $\mathcal{N}(\theta^{*}_{0:t},(m(t+1)F_{0:t})^{-\frac{1}{2}})$ respectively.
$F_{0:t-1}$, as mentioned before, is previously saved after learning task $1$ to $t$, and only $F_{0:t}$ is unknown.
To obtain $F_{0:t}$, we first estimate $F_{t}$ over $\mathcal{D}_{t}$ by Eqn.~\eqref{fisher_def}.
Then, with $F_{0:t-1}$ and $F_{t}$, $F_{0:t}$ can be obtained by
\begin{equation}
\label{Fisher_recursion}
    F_{0:t} = \frac{t F_{0:t-1} + F_{t}}{t+1}.
\end{equation}

With the posterior approximation introduced above and the definition of information gain Eqn.~\eqref{discrete_ig}, the information gain provided by $\mathcal{D}_{t}$ can be estimated by
\begin{equation}
\begin{aligned}
\label{IG_4_2_3}
  IG(Q(\theta,N),\mathcal{D}_{t})
  &\approx \frac{1}{2}\log_{2}m(t+1)F_{0:t} - \frac{1}{2}\log_{2}mtF_{0:t-1} \\
  &= \frac{1}{2}\log_{2}\frac{t F_{0:t-1} + F_{t}}{t F_{0:t-1}}.
\end{aligned}
\end{equation}
Refer to appendix for detailed elaboration on Eqn.~\eqref{IG_4_2_3}.

\subsection{Information Gain-Guided Bit Freezing}
\label{bit_fixing}
With the estimated information gain, we now investigate which bits shall be frozen to prevent forgetting on task $t$ in the subsequent learning.
Approximately speaking, each free bit of $Q(\theta,N)$ flip with equal probability, and the Shannon entropy on $Q(\theta,N)$ is thus the number of bits free-to-flip in total.
Information gain, which is the reduction of the Shannon entropy after learning $\mathcal{D}_{t}$, is thus approximately equivalent to how many more bits of $Q(\theta,N)$ become certain throughout the process.
Leaving these certain bits to continue to flip means discarding the information gain provided, which directly motivates us to freeze these certain bits.
Therefore, the number of additional bits to freeze $n_{t}$ can be obtained by
\begin{equation}
    n_{t} = \ceil{IG(Q(\theta,N),\mathcal{D}_{t})}.
\end{equation}
In addition, since the posterior on $\theta$ is a Gaussian distribution and only becomes peaky and concentrated locally, bits in $Q(\theta,N)$ become certain starting from more significant bits (with higher bit positions) to less significant bits (with lower bit positions).
In practice, we also clip $n_{t}$ between $0$ and $N-n_{0:t-1}$ so that the number of bits frozen in total does not surpass $N$.
Therefore, by freezing the first $n_{t}$ bits of the remaining $N-n_{0:t-1}$ bits of $Q(\theta,N)$, our method can specifically preserve the information gain provided by $\mathcal{D}_{t}$ and thus prevent forgetting on task $t$.

In the end, the number of bits frozen in total $n_{0:t}$, current Fisher information $F_{0:t}$, as well as current parameter value $\theta_{0:t}^{*}$ are then saved for the next task iteration.
By recursively preserving information gain provided by each task, our method can mitigate forgetting in continual learning.
A detailed summary for our algorithm is shown in Alg.~\ref{algo_blip}.

\begin{algorithm}
\caption{Bit-Level Information Preserving (BLIP)}
\label{algo_blip}
\begin{algorithmic}[1]

\State {$\theta$ $\gets$ {randomly init parameters}}
\State {$N \gets$ {pre-defined quantization bits}}
\State {$c \gets 0$ {, } $S \gets 0$ {, } $F \gets$ {hyper parameter $F_{0}$}}
\State {$L \gets$ {loss function, } $\alpha \gets$ {learning rate}}
\State {$Q \gets$ {quantization function}}
\For{$t \gets 1$ to $T$}
    \State {obtain task $t$ dataset $D_{t}=\{\mathcal{X},\mathcal{Y}\}$}
    \State {$\theta \gets$ \Call{train}{$\theta,c,S,D,L$}}
    \State {$IG, F_{post} \gets $\Call{estimateInfoGain}{$\theta$,$\mathcal{X}$, $F$, $t$}}
    \State {$n_{t} \gets \min{(\max(\ceil{IG},0), N-S)}$}
    \State {$c \gets Q(\theta,S)$ {, } $S \gets S + n_{t}$ {, } $F \gets F_{post}$}
\EndFor

\Function{train}{$\theta, c, S, D, L, Q, \alpha$}
    \While{loss $L$ not converged on $D$}
        \State {$\theta$ $\gets$ $\theta - \alpha\nabla_{\theta}L(D,Q(\theta))$}
        \State {$\theta$ $\gets$ $\min(\max(\theta,c-\frac{1}{2^{S}}), c+\frac{1}{2^{S}})$}
    \EndWhile
    \State {return $\theta$}
\EndFunction

\Function{estimateInfoGain}{$\theta, \mathcal{X}, F$, $t$}
    \State {$F_{t} \gets 0$}
    \For{each data point $x$ in $\mathcal{X}$}
        \State {sample $y \sim p_{\theta}(x)$}
        \State {$F_{t} \gets F_{t} + \left(\frac{\partial \log p_{\theta}(y|x)}{\partial\theta}\right)^{2}$}
    \EndFor
    \State{$F_{t} \gets \frac{F_{t}}{|\mathcal{X}|}$}
    \State {$F_{post} \gets t F + F_{t}$}
    \State {$IG \gets \frac{1}{2}\log_{2}\frac{F_{post}}{tF}$}
    \State{return $IG$ {, } $F_{post}$}
\EndFunction
\end{algorithmic}
\end{algorithm}

\section{Experiments}
In this section, we report experiment results of our method.
In Sec.~\ref{exp_setup}, we introduce experimental setups.
Next, we demonstrate the effectiveness of our method in continual image classification settings of various scales and continual reinforcement learning in Sec.~\ref{cls_exp} and Sec.~\ref{reinforce_exp} respectively.
Finally, in Sec.~\ref{capacity_usage}, based on our RL agent, we visualize the bit freezing process throughout continual learning to further see how BLIP works.

\subsection{Experimental Setups}
\label{exp_setup}
\textbf{Evaluation Metrics}
We evaluate all methods with two widely used metrics: average accuracy (ACC) and backward transfer (BWT) \cite{lopez2017gradient}.
ACC is the average accuracy over all learned tasks at the end of the continual learning process, while BWT quantifies how learning on new tasks affects model performance on previously learned tasks.
Formally, let the total number of learned tasks be $T$, and $A_{i,j}$ denote the accuracy on the $j$-th task after sequentially learning the first $i$ tasks.
ACC is defined as
\begin{equation}
    ACC = \frac{1}{T}\sum_{i=1}^{i=T}A_{T,i},
\end{equation}
and BWT is defined as
\begin{equation}
    BWT = \frac{1}{T-1}\sum_{i=1}^{i=T}(A_{T,i}-A_{i,i}).
\end{equation}
Negative BWT implies forgetting on previous tasks while positive means learning on new tasks can even help with the performance on previous tasks.
For each of the two metrics, higher values  indicate better performance.

\textbf{Benchmarks and Models}
We target the setting of continual learning with disjoint tasks, where task identity is given during both training and evaluation. The following continual image classification benchmarks are used to evaluate our method:
MNIST-5 \cite{ebrahimi2019uncertainty,ebrahimi2020adversarial}, PMNIST \cite{kirkpatrick2017overcoming,serra2018overcoming}, Alternating Cifar10/100 \cite{serra2018overcoming}, 20-Split mini-ImageNet \cite{zhang2019variational,chaudhry2019tiny}, Sequence of 5 tasks \cite{ebrahimi2020adversarial}.
Reinforcement Learning agents are evaluated over a sequence of six Atari environments \cite{mnih2013playing}.
For MNIST-based settings, we adopt a two layer perceptron with 1,200 hidden units as \cite{ebrahimi2019uncertainty}.
For other classification settings, we use AlexNet \cite{krizhevsky2017imagenet} for all methods.
For the reinforcement learning setting, we use a PPO agent \cite{schulman2017proximal} with three convolution layers and one fully connected layer.

\textbf{Baselines}
For image classification tasks, we compare our method with naive baselines of direct sequential fine-tuning (FT), fine-tuning classifier with backbone fixed after the first task (FT-FIX), and previous methods including
EWC \cite{kirkpatrick2017overcoming,huszar2017quadratic},
VCL \cite{nguyen2017variational}
IMM \cite{lee2017overcoming},
LWF \cite{li2017learning},
A-GEM \cite{chaudhry2018efficient},
HAT \cite{serra2018overcoming},
UCB \cite{ebrahimi2019uncertainty}, and
ACL \cite{ebrahimi2020adversarial}.
Among these methods, HAT and ACL incur linearly growing memory overheads due to task specific parameters.
A variant of ACL, which is ACL with replay buffer (ACL-R), using replay buffer to further boost forgetting prevention.
In addition, UCB and VCL are based on Bayesian neural networks \cite{blundell2015weight}, which have much higher computation cost than normal networks during training as several times of Monte-Carlo sampling are needed over model parameters in one iteration of gradient descent.
For IMM, we use its "mode" version \cite{lee2017overcoming}.
For EWC, we use its online variant proposed in \cite{huszar2017quadratic}.

\textbf{Hyperparameters}
For classification tasks, our code is based on released implementations of \cite{serra2018overcoming}.
Plain Stochastic Gradient Descent with batch size $32$ and initial learning rate $0.05$ is used for optimization.
Learning rate decays by a factor of $3$ if validation loss plateaus for 5 consecutive epochs.
Training stops when learning rate is below $1 \times 10^{-4}$ or we have iterated over 200 epochs.
For reinforcement learning, the PPO agent is trained with initial learning rate $2.5 \times 10^{-4}$, entropy regularization coefficient  $0.01$.
We sample totally 10 millions steps in each environment to train our agents.
Model parameters are quantized to 20 bits for BLIP. In addition, we ablate the choice for $F_{0}$ in appendix.

\subsection{Continual Learning for Image Classification}
\label{cls_exp}
\subsubsection{MNIST-5 and P-MNIST}
\label{mnist_5_text}
We start with a relatively simple setting of MNIST-5 where the ten MNIST classes are equally split into five separate tasks for the model to learn sequentially.
In addition, we evaluate our method on another MNIST-based continual learning setting, Permuted MNIST.
In this setting, the first task is the standard MNIST classification while the following nine tasks are created by permuting pixels of MNIST images with nine different schemes.
Results on MNIST-5 and P-MNIST are shown in Tab.~\ref{mnist5_res} and Tab.~\ref{pmnist_res} respectively.
From these results, it can be seen that our method achieves on par performance with state-of-the-art methods such as UCB, HAT, and is significantly better than other methods in terms of ACC and BWT.
However, as mentioned, UCB and HAT achieve these results with either much higher computation cost or undesired growing memory overheads during training, while our method is free from these hazards.

\begin{table}[t]
\caption{Results on MNIST-5. MO is memory overheads complexity. Results denoted by ($^{*}$) are provided by \cite{ebrahimi2019uncertainty}.}
\label{mnist5_res}
\centering
\begin{tabular}{ |p{1.5cm}|p{1.4cm}|p{1.4cm}|p{1.2cm}|}
    \hline
    Methods& BWT(\%) & ACC(\%) & MO\\
    \hline
    VCL$^{*}$       & -0.56 & 98.20 & $O(1)$ \\
    IMM$^{*}$       & -11.20& 88.54 & $O(1)$ \\
    EWC$^{*}$       & -4.20 & 95.78 & $O(1)$ \\
    HAT$^{*}$       &  0.00 & 99.59 & $O(T)$ \\
    UCB$^{*}$       &  0.00 & 99.63 & $O(1)$ \\
    \hline
    \textbf{BLIP}      &  0.01 & 99.64 & $O(1)$ \\
    \hline
\end{tabular}
\end{table}

\begin{table}
\centering
\caption{Results on PMNIST. Results denoted by ($^{*}$) are provided by \cite{ebrahimi2019uncertainty}.}
\label{pmnist_res}
\begin{tabular}{|p{1.5cm}|p{1.4cm}|p{1.4cm}|p{1.2cm}|}
    \hline
    Methods& BWT(\%) & ACC(\%) & MO\\
    \hline
    LWF$^{*}$        &-31.17 & 65.65 & $O(1)$ \\
    IMM$^{*}$        & -7.14 & 90.51 & $O(1)$ \\
    HAT$^{*}$        & 0.03  & 97.34 & $O(T)$ \\
    UCB$^{*}$        & 0.03  & 97.42 & $O(1)$ \\
    \hline
    \textbf{BLIP}    & -0.21 & 97.31 & $O(1)$ \\
    \hline
\end{tabular}
\end{table}

\begin{table}
    \centering
    \caption{Results on Alternating Cifar10/100.}
    \label{alt_cifar_res}
    \begin{tabular}{ |p{1.5cm}|p{1.4cm}|p{1.4cm}|p{1.2cm}|  }
    \hline
    Methods& BWT(\%) & ACC(\%) & MO\\
    \hline
    FT                &-14.23 &  67.94 & 0 \\
    FT-FIX            &  0.00 &  43.78 & 0 \\
    LWF                & -40.88 & 44.27 & $O(1)$ \\
    IMM                & -16.47 &  66.43 & $O(1)$ \\
    EWC                &  -5.5 &  72.11 & $O(1)$ \\
    HAT                & -0.02 &  80.22 & $O(T)$ \\
    \hline
    \textbf{BLIP}      & -0.43 &  74.70 & $O(1)$ \\
    \hline
    \end{tabular}
\end{table}

\subsubsection{Alternating Cifar10/100}
Next, we evaluate our method on a more challenging setting of Alternating Cifar10/100.
Cifar10 and Cifar100 datasets are both divided equally into five different tasks respectively, with two classes in each Cifar10 task and twenty classes in each Cifar100 task.
The model is required to learn sequentially  the combined ten tasks.
Our results are presented in Tab.~\ref{alt_cifar_res}.
In this setting, one can observe that the gap in terms of forgetting prevention between pruning-based method (HAT) and regularization-based methods (LWF, IMM, EWC) is very large.
Our method BLIP, which suffers no memory overheads during evaluation, greatly surpasses the compared regularization-based methods and makes a significant step towards continual learning being free from forgetting and growing memory overheads.

\subsubsection{20-Split mini-ImageNet}
To further show the effectiveness of our method in larger scale scenario, we conduct experiments in the setting of 20-Split mini-ImageNet, where the 100 classes of mini-ImageNet are equally split into 20 tasks. Results are shown in Tab.~\ref{mini-imagenet-main}. Our splitting scheme is exactly the same as \cite{ebrahimi2020adversarial}. Model size of each method is kept similar for fair comparison. From the results, we can see that BLIP achieves the highest ACC. HAT has better BWT than BLIP by using task-specific parameters while ACL-R boosts BWT with replay buffer.

\begin{table}[ht]
\caption{Experiment Results on 20-Split mini-ImageNet. RB is size of replay buffer. Results are averaged over 5 random seeds; mean $\pm$ std is reported. Results denoted by ($^{\dagger}$) are provided by \cite{ebrahimi2020adversarial}.}
\label{mini-imagenet-main}
\centering
\begin{tabular}{|p{1.3cm}|p{2.0cm}|p{2.0cm}|p{1.4cm}|}
\hline
Methods&BWT (\%)&ACC (\%)& RB (MB)\\
\hline
LWF               &-45.93 $\pm$ 1.05&29.30 $\pm$ 0.64 & - \\
A-GEM$^{\dagger}$   &-15.23 $\pm$ 1.45 &52.43 $\pm$ 3.10 & 110.1 \\
HAT$^{\dagger}$     &-0.04 $\pm$ 0.03 &59.45 $\pm$ 0.05 & -\\
ACL$^{\dagger}$     &-3.71 $\pm$ 1.31 &57.66 $\pm$ 1.44 & -  \\
ACL-R$^{\dagger}$   &0.00 $\pm$ 0.00 & 62.07 $\pm$ 0.51 & 8.5  \\
\hline
\textbf{BLIP}     &-1.05 $\pm$ 0.42 & 65.69 $\pm$ 0.87 & - \\
\hline
\end{tabular}
\vspace{-5mm}
\end{table}

\begin{table}[ht]
\caption{Experiment Results on Sequence of 5 tasks. MS is Model Size. Results are averaged over 5 random seeds; mean $\pm$ std is reported. Results denoted by ($^{\dagger}$) are provided by \cite{ebrahimi2020adversarial}.}
\label{seq_of_5}
\centering
\begin{tabular}{|p{1.2cm}|p{2.0cm}|p{2.0cm}|p{1.4cm}|}
\hline
Methods& BWT (\%) & ACC (\%) & MS (MB) \\
\hline
FT                &-34.16 $\pm$ 9.00&65.31 $\pm$ 7.18& 16.97 \\
FT-FIX            &0.00 $\pm$ 0.00&71.57 $\pm$ 4.55& 16.97 \\
LWF               &-61.14 $\pm$ 5.92&42.93 $\pm$ 4.59& 16.97 \\
IMM               &-21.56 $\pm$ 5.46&73.39 $\pm$ 4.20& 16.97 \\
UCB$^{\dagger}$   &-1.34 $\pm$ 0.04&76.34 $\pm$ 0.12& 16.4  \\
ACL$^{\dagger}$   &-0.01 $\pm$ 0.15&78.55 $\pm$ 0.29& 16.5  \\
\hline
\textbf{BLIP}      & -0.13 $\pm$ 0.324& 82.87 $\pm$ 1.43&  16.97 \\
\hline
\end{tabular}
\vspace{-5mm}
\end{table}

\begin{figure*}
  \centering
  \includegraphics[width=15cm]{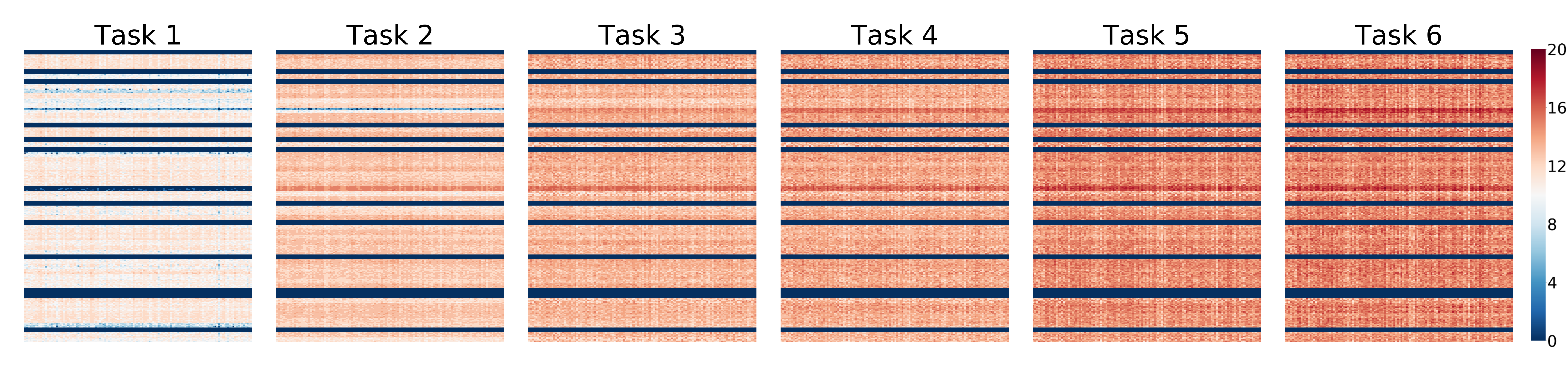}
  \caption{\textbf{Bit Freezing Visualization} We visualize bit freezing process of our RL agent's third convolution layer parameters during continual learning.
  Each pixel in a heat map represents the number of frozen bits of the corresponding entry (parameter) in weight matrix of the convolution layer.
  Each parameter has a total of 20 bits.
  From darker blue to darker red denotes more bits being frozen.
  Visualization of other layers are shown in appendix.
  }
  \label{fig_cap_usage}
  \vspace{-3mm}
\end{figure*}

\begin{figure*}
  \centering
  \includegraphics[width=14.5cm]{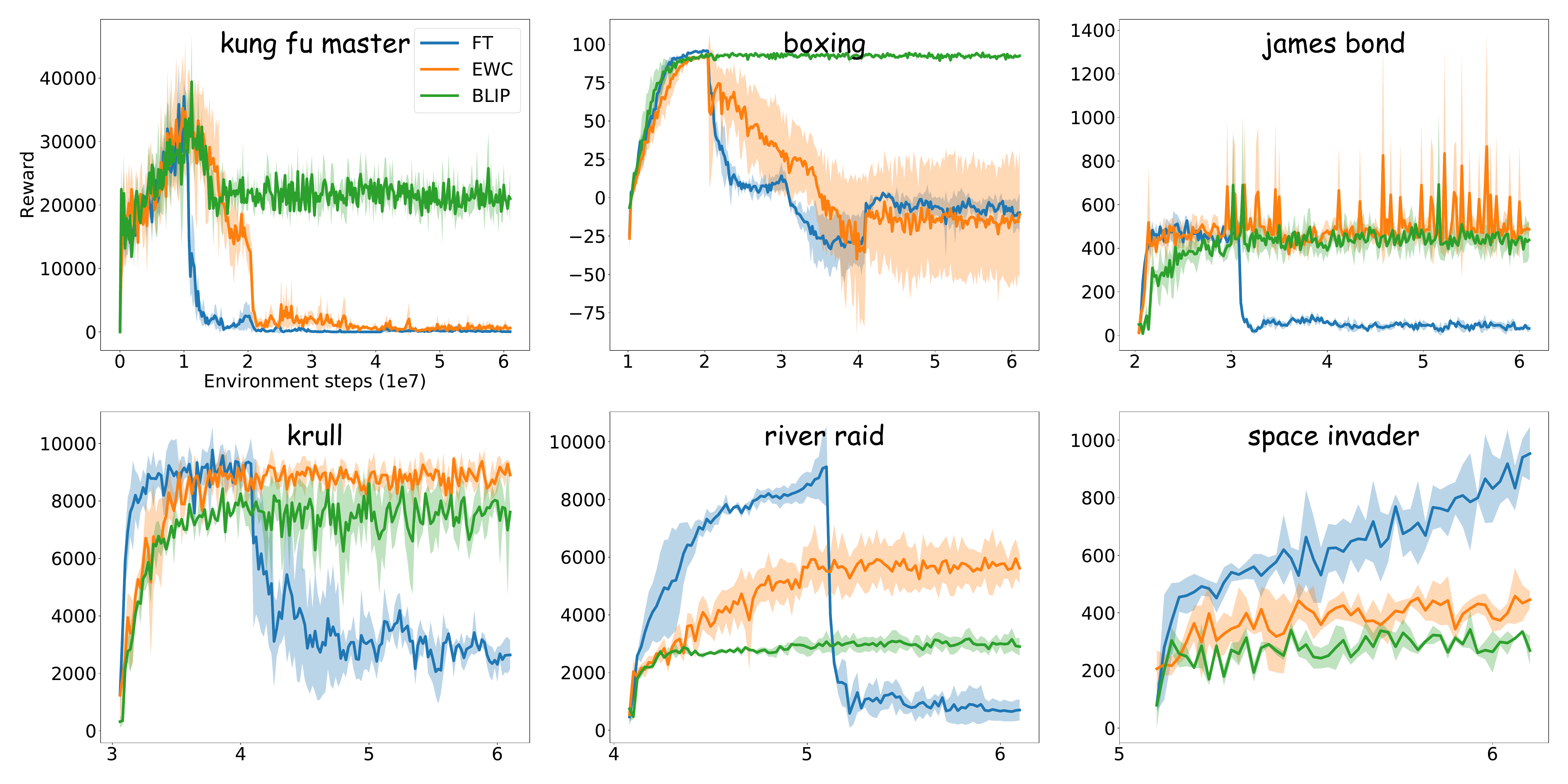}
  \caption{\textbf{Rewards on each task}. Each task is trained for 10 million environment steps, and 60 million environment steps are thus used in total. Tasks are learned left to right and top to bottom. Results are averaged over 3 random seeds.
  }
  \label{fig_each_task_reward}
  \vspace{-3mm}
\end{figure*}

\begin{figure}
  \centering
  \includegraphics[width=6.5cm]{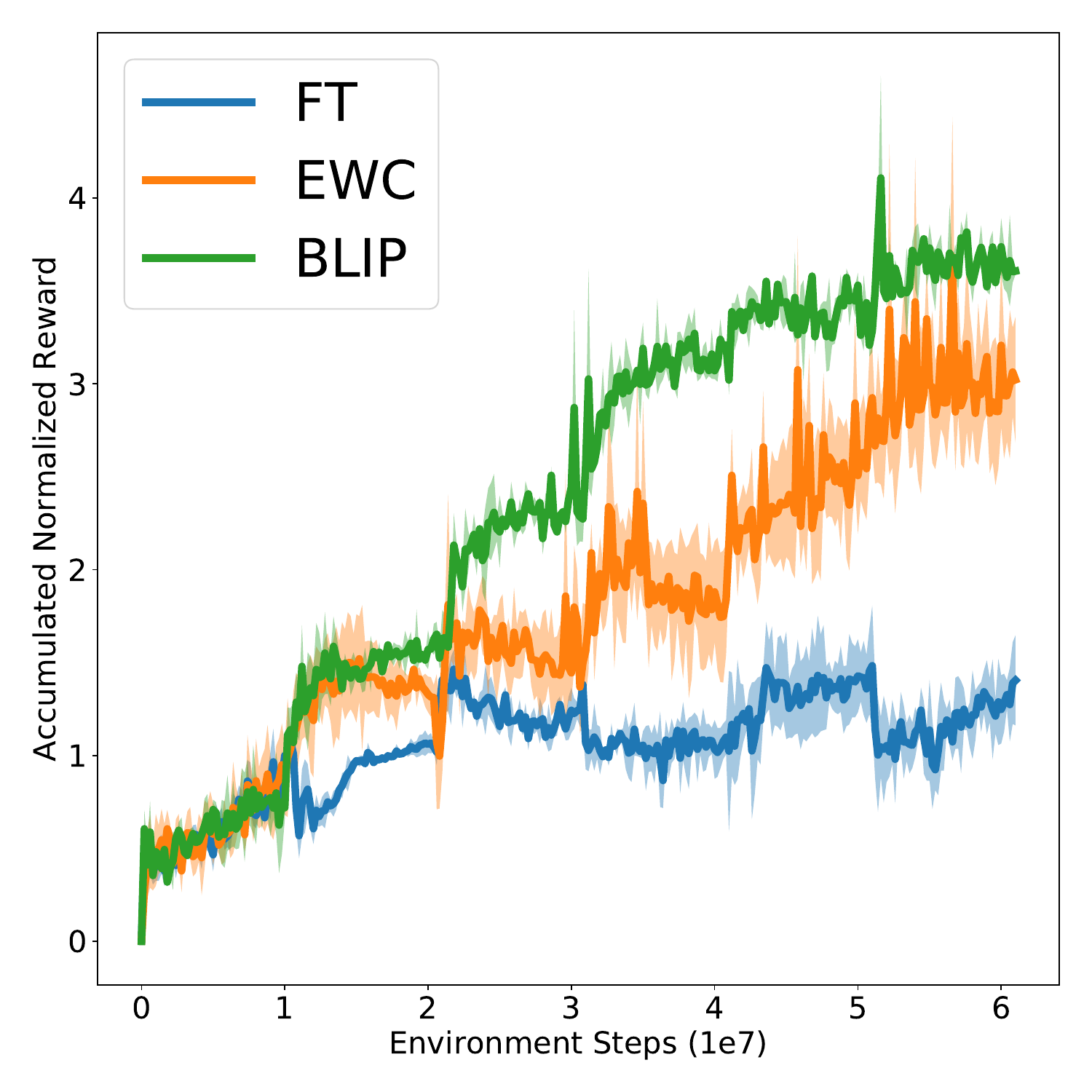}
  \caption{\textbf{Accumulated Normalized Rewards}.
  }
  \label{fig_overall_reward}
  \vspace{-4.5mm}
\end{figure}

\subsubsection{Sequence of 5 tasks}
To further demonstrate our method's ability to learn sequentially on more diverse domains, we evaluate it by learning on Cifar10, notMNIST, MNIST, SVHN, FashionMNIST.
Our results are averaged over 5 random seeds as \cite{ebrahimi2020adversarial}.
From the results, one can observe that our method considerably surpasses previous state-of-the-arts such as UCB and ACL in terms of ACC.
Because ACL relies on given task information and saved task-specific modules, it can achieve better BWT than BLIP.
When doing a fair comparison with other methods, BLIP can achieve much better BWT.

\subsection{Continual Reinforcement Learning}
\label{reinforce_exp}
Our continual reinforcement learning benchmark requires the agent to sequentially learn to play the following six Atari games: kung fu master, boxing, james bond, krull, river raid, space invaders.

We first demonstrate our results on each task separately in Fig.~\ref{fig_each_task_reward}, where we plot rewards of trained agents on each task from the point they start to train on the task to the ending of the whole continual learning process.
From this figure, we can see that the agent trained by FT suffers from catastrophic forgetting as its reward on one task drastically decreases after shifting to learning on the following tasks.
For the agent trained with EWC, although forgetting is alleviated and performance can be retained on the last four tasks, its performance on the first two tasks can not sustain through the continual learning process and decreases dramatically.
Different from these two, our method can remember to play all tasks in this continual learning setting.

Next, to compare the performance of each method in an overall manner, we plot the accumulated reward, which is the sum of rewards on all learned tasks at each environment step, in Fig.~\ref{fig_overall_reward}.
We normalize rewards on each task to the scale of 0 to 1.
Therefore, if an agent can both learn and remember all six tasks well, its accumulated normalized reward shall steadily increase from 0 to 6 in the continual learning process.
On the contrary, the accumulated normalized reward should oscillate around 1 after training the first task if an agent constantly forgets to perform on previously learned tasks, which is the case for FT in the figure.
As illustrated, our method BLIP outperforms EWC in terms of the accumulated normalized reward in the whole continual learning process.

\subsection{Visualizing Bit Freezing and Related Discussion}
\label{capacity_usage}
To further understand the underlying working mechanism of BLIP through the whole continual learning process, we visualize the corresponding bit freezing process in Fig.~\ref{fig_cap_usage}.
In this figure, we show how bits are gradually frozen for parameters of our RL agent after learning each task.
As can be observed, more and more bits of model parameters are frozen to anchor the learned knowledge as continual learning proceeds.
In addition, we can see less bits are frozen after learning the last two tasks comparing to the first four tasks, which means not many bits of the model parameters have been effectively used to adapt to the last two tasks.
This corresponds to the phenomenon that BLIP does not adapt well on the last two tasks in Fig.~\ref{fig_each_task_reward}.
We leave solving this problem for future work.

Previous neurobiology evidences suggest that human brain may mitigate forgetting by synaptic consolidation\cite{yang2009stably,yang2014sleep}, which is the process of strengthening synapses that are crucial for certain knowledge or skills.
In the context of artificial neural networks, where synapses (connections between neurons) are implemented as weight parameters, BLIP mimics the process of synaptic consolidation via bit freezing.
Concretely, as continual learning proceeds, more bits are gradually frozen as shown in Fig.~\ref{fig_cap_usage}, which corresponds to the process of synapses being strengthened as more skills and knowledge are learned.

\section{Conclusion}
In this work, we study continual learning through lens of information theory and provide a new perspective on forgetting from information gain.
Based on this perspective, we propose a novel approach called Bit-Level Information Preserving, which dives into bit-level and directly preserves information gain on model parameters provided by previously learned tasks via bit freezing. 
We conduct extensive experiments to show that our method can successfully adapt to continual classification settings of various scales as well as continual reinforcement learning settings.

\section*{Acknowledgment}
The authors would like to thank Jun Hao Liew, Kuangqi Zhou, Minda Hu, Bingyi Kang, Zihang Jiang and reviewers for their helpful discussions and feedbacks. This work was partially supported by AISG-100E-2019-035, MOE2017-T2-2-151, NUS\_ECRA\_FY17\_P08 and CRP20-2017-0006.


{\small
\bibliographystyle{ieee_fullname}
\bibliography{egbib}
}

\clearpage
\onecolumn
\section{Appendix}
\subsection{Detailed Elaboration on Information Gain Estimation}
In this section, we elaborate on \textbf{how to obtain Eqn.(8) mentioned in the main text}.

To start with, recall that we have introduced in the main text that $p(\theta_{0:t-1})$ and $p(\theta_{0:t})$ can be approximated by:
\begin{equation}
\label{post_t_1}
    p(\theta_{0:t-1}) = \mathcal{N}(\theta^{*}_{0:t-1},(tmF_{0:t-1})^{-\frac{1}{2}})
\end{equation}
and
\begin{equation}
\label{post_t}
    p(\theta_{0:t}) = \mathcal{N}(\theta^{*}_{0:t},((t+1)mF_{0:t})^{-\frac{1}{2}})
\end{equation}
respectively.
In addition, relation between $F_{0:t-1}$ and $F_{0:t}$ is:
\begin{equation}
    F_{0:t} = \frac{t  F_{0:t-1} + F_{t}}{t+1}.
\end{equation}

Next, recall the definition of information gain on the quantized parameter $Q(\theta,N)$ is:
\begin{equation}
    \label{IG_def}
    IG(Q(\theta,N), \mathcal{D}_{t}) = H(Q(\theta_{0:t-1},N)) - H(Q(\theta_{0:t},N)).
\end{equation}
To calculate $IG(Q(\theta,N), \mathcal{D}_{t})$, we first connect $H(Q(\theta,N))$ with $h(\theta)$, where $h(\theta)$ is differential entropy defined as:
\begin{equation}
    h(\theta) = -\int p(\theta)\ln{p(\theta)}d\theta.
\end{equation}
$H(Q(\theta,N))$ and $h(\theta)$ can be connected by the following lemma.
\begin{lemma}
Consider a random variable $X$ with density function $p(x)$ with support of $[-1+\frac{1}{2^{N}}, 1-\frac{1}{2^{N}}]$ and assume $p(x)\ln{p(x)}$ is Riemann integrable. If we quantize $X$ by $N$ bits with $Q$ defined in the main text, then we have:
\begin{equation}
\label{dis_cont_transfer}
    H(Q(X, N)) \approx \frac{1}{\ln2}h(X) + N - 1.
\end{equation}
\end{lemma}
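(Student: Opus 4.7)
The plan is to recognise $H(Q(X,N))$ as a Riemann sum for the differential entropy integral, with an additive correction coming from $\log_2$ of the quantisation bin width. This is the classical correspondence between the entropy of a fine quantisation and the differential entropy.

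First I would enumerate the cells of the quantiser. The map $Q(\cdot,N)$ from Eqn.~\eqref{quantize_func} partitions the support into $2^N$ cells $I_i$ of common width $\Delta = 2^{1-N}$, and collapses every $x \in I_i$ to a single representative $q_i \in I_i$. The containment $[-1+1/2^N,\, 1-1/2^N] \subseteq [-1+1/2^{N+1},\, 1-1/2^{N+1}]$ assumed in the hypothesis guarantees that the clamping in Eqn.~\eqref{quantize_func} is inactive for every $x$ in the support. By the mean value theorem for integrals (continuity of $p$ being a byproduct of the Riemann integrability of $p\ln p$),
\[
p_i \;:=\; P(Q(X,N) = q_i) \;=\; \int_{I_i} p(x)\,dx \;=\; p(\xi_i)\,\Delta, \qquad \xi_i \in I_i.
\]

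Next I would plug this into Eqn.~\eqref{Shannon} and split the logarithm:
\[
H(Q(X,N)) \;=\; -\sum_i p(\xi_i)\Delta \log_2 p(\xi_i) \;-\; (\log_2 \Delta)\sum_i p(\xi_i)\Delta.
\]
Riemann integrability of $p\ln p$ (hence of $p\log_2 p$ up to a constant) together with that of $p$ itself lets both sums be identified with their continuum limits as $\Delta \to 0$:
\[
-\sum_i p(\xi_i)\Delta \log_2 p(\xi_i) \;\longrightarrow\; -\int p(x)\log_2 p(x)\,dx \;=\; \tfrac{1}{\ln 2}\,h(X), \qquad \sum_i p(\xi_i)\Delta \;\longrightarrow\; 1.
\]
Substituting $-\log_2 \Delta = N - 1$ and combining the two limits delivers the stated approximation.

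The main obstacle is the bookkeeping of $\Delta$ out of the definition of $Q$: obtaining the $N-1$ correction rather than $N$ relies on viewing the quantiser as splitting the length-$2$ interval $[-1,1]$ into $2^N$ cells, so one has to be careful about the clamping endpoints and about verifying that the boundary cells carry negligible probability under the hypothesised support. A secondary subtlety is the interpretation of the symbol $\approx$: the error is governed by the oscillation of $p\log_2 p$ across a single cell, so the claim should be read as the leading-order behaviour as $N\to\infty$, in the spirit of R\'enyi's classical theorem on the quantisation of continuous random variables.
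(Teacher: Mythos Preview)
Your proposal is correct and follows essentially the same route as the paper: represent the bin probabilities via the mean value theorem as $p(\xi_i)\Delta$, split $\log_2\bigl(p(\xi_i)\Delta\bigr)$ into $\log_2 p(\xi_i)+\log_2\Delta$, and identify the two resulting sums as a Riemann sum for $\tfrac{1}{\ln 2}h(X)$ and the constant $-\log_2\Delta=N-1$. The only cosmetic difference is that the paper counts $2^{N}-1$ bins on the stated support while you count $2^{N}$ cells on $[-1,1]$; both use the same width $\Delta=2^{1-N}$ and arrive at the same leading-order result.
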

\begin{proof}
Firstly, we can divide the support of $X$, which is $[-1+\frac{1}{2^{N}}, 1-\frac{1}{2^{N}}]$,
into $2^{N}-1$ bins with equal length of $\delta_{N} = \frac{1}{2^{N-1}}$.
Next, denote the center of the $i$-th bin as $x_{i}$ and we have: $x_{i} = \frac{i}{2^{N-1}} - 1$.
We write $P(Q(X,N)=x_{i})$ with shorthand of $P_{i}(Q(X,N))$ and we have:
\begin{equation}
\label{MVT}
\begin{aligned}
    P_{i}(Q(X,N)) &= \int_{x_{i}-\frac{\delta_{N}}{2}}^{x_{i}-\frac{\delta_{N}}{2}}{p(x)dx}
    \;\;\;\text{(property of probability density function)}\\
    &\approx p(x_{i})\delta_{N}
    \;\;\;\text{(Mean Value Theorem as $\delta_{N} \rightarrow 0$.)}
\end{aligned}
\end{equation}

With the above Eqn.~\eqref{MVT}, we then rewrite the Shannon entropy of $Q(X,N)$ as:
\begin{equation}
\begin{aligned}
    H(Q(X,N)) &= -\sum_{i=1}^{2^{N}-1} P_{i}(Q(X,N)) \log_{2} P_{i}(Q(X,N)) \\
    &\approx -\sum_{i=1}^{2^{N}-1} p(x_{i})\delta_{N}\log_{2}p(x_{i})\delta_{N}
    \;\;\;\text{(applying \eqref{MVT})}\\
    &= -\sum_{i=1}^{2^{N}-1} \delta_{N}p(x_{i})\log_{2}p(x_{i})
    -\sum_{i=1}^{2^{N}-1}\delta_{N}p(x_{i})\log_{2}\delta_{N} \\
    &\approx \frac{1}{\ln2}h(X) + (N - 1)
    \;\;\;\text{(Riemann integrable as $\delta_{N} \rightarrow 0$;$\delta_{N}=\frac{1}{2^{N-1}}$)}
\end{aligned}
\end{equation}
\end{proof}

Therefore, the information gain can be rewritten as:
\begin{equation}
    IG(Q(\theta,N),\mathcal{D}_{t}) = \frac{1}{\ln{2}}(h(\theta_{0:t-1}) - h(\theta_{0:t})).
\end{equation}
With the posterior approximation Eqn.~\ref{post_t_1} and Eqn.~\ref{post_t}, $h(\theta_{0:t-1})$ and $h(\theta_{0:t})$ can be calculated in closed-form by $\frac{1}{2} - \frac{1}{2}\ln{(2\pi{m t F_{0:t-1}})}$ and $\frac{1}{2} - \frac{1}{2}\ln{(2\pi{m (t+1) F_{0:t}})}$ respectively.
That means that $IG(Q(\theta,N),\mathcal{D}_{t})$ can be approximated by:
\begin{equation}
\label{IG_appendix}
\begin{aligned}
 IG(Q(\theta,N),\mathcal{D}_{t})
 &=\frac{1}{\ln{2}}(h(\theta_{0:t-1}) - h(\theta_{0:t})) \\
 &\approx \frac{1}{2\ln{2}}\ln\frac{m (t+1) F_{0:t}}{m t F_{0:t-1}} \\
 &= \frac{1}{2}\log_{2}\frac{t F_{0:t-1} + F_{t}}{t F_{0:t-1}}.
\end{aligned}
\end{equation}

In this way, we derive the Eqn.(8) mentioned in the main text.

\subsection{Implementation Details}
\subsubsection{Parameter Range}
To perform parameter quantization, we normally have to constraint parameters to be within a certain interval. In the main text, to more conveniently elaborate our method, we assume model parameters are constrained within $[-1,1]$ and then quantize them accordingly. However, in real scenarios, parameters of different layers normally distributed in different manner. Therefore, we constraint model parameters in the interval of $[-\frac{C}{\sqrt{n}}, \frac{C}{\sqrt{n}}]$, where $C$ is a pre-defined hyper-parameter, and $n = num\_input\_dimension$ for fully connected layers and $n=kernel\_size \times kernel\_size \times num\_input\_channel$ for convolution layers. $C$ is set to be 20 for the mini-ImageNet experiments. and 6 for all the other experiments.
This strategy is inspired by previous literature on model parameter initialization.

\subsubsection{Trainig Strategy}
According to the main text, when trainig on a new task, Straight Through Estimator (STE) is used to perform quantization aware training. However, since we quantize the model to 20 bits in all our experiments and the difference is insignificant between whether or not using STE. Therefore, we do not quantize parameters during training and only quantize parameters before doing bit freezing.

\subsubsection{Information Gain}
Empirically, we find that using the following formulation:
\begin{equation}
    IG(Q(\theta,N),\mathcal{D}_{t}) = \frac{1}{2}\log_{2}\frac{t F_{0:t-1} + F_{t}}{(t+1) F_{0:t-1}}
\end{equation}
which is slightly different from Eqn.~\eqref{IG_appendix}, can produce better results. Therefore, we adopt this slightly modified version to estimate information gain.

For more details, please refer to our released implementation.

\subsection{More Detailed Experiment Results}
\subsubsection{More Detailed 20-split mini-ImageNet Results}
In this section, we add comparison of model size for the 20-split mini-ImageNet experiment in Tab.~\ref{mini-imagenet}. All models are variants of AlexNet. From the table, we show that the model size of all methods are kept approximate to ensure fair comparisons.

\subsubsection{Experiments on ResNet}
In addition, to demonstrate the effectiveness of our method on models with Batch Norm layers and residual connections, we evaluate our method on ResNet-18 model with 20-split mini-ImageNet. The result is shown in Tab.~\ref{resnet}.

\subsubsection{Discussion and Ablation Study of $F_{0}$}
As mentioned in the main text, $F_{0}$ is an important hyper-parameter for our method. According to our method, smaller $F_{0}$ corresponds to more bits being frozen when learning subsequent tasks and vice versa. Therefore, by setting $F_{0}$ to be small, our method tends to suffer less forgetting while being less capable of adapting new tasks and vice versa.
We ablate using different $F_{0}$ with our 20-split mini-ImageNet experiment in Tab.~\ref{ablate_F_0}. Through this ablation study, we can see that setting $F_{0}=5 \times 10^{-16}$ can best balance between preventing forgetting (BWT) and adapting new tasks (ACC).

\begin{table}[H]
\caption{Experiment Results on 20-Split mini-ImageNet. RB is size of replay buffer, MS is model size. Results are averaged over 5 runs; mean $\pm$ std is reported. Results denoted by ($^{\dagger}$) are provided by the ACL paper.}
\label{mini-imagenet}
\centering
\begin{tabular}{|p{1.6cm}|p{2.0cm}|p{2.0cm}|p{1.4cm}|p{1.4cm}|}
\hline
Methods&BWT (\%)&ACC (\%)& RB (MB) & MS (MB)\\
\hline
LWF               &-45.93 $\pm$ 1.05&29.30 $\pm$ 0.64 & - & 104.1 \\
A-GEM$^{\dagger}$             &-15.23 $\pm$ 1.45 &52.43 $\pm$ 3.10 & 110.1 & 102.6 \\
HAT$^{\dagger}$               &-0.04 $\pm$ 0.03 &59.45 $\pm$ 0.05 & - & 123.6 \\
ACL$^{\dagger}$               &-3.71 $\pm$ 1.31 &57.66 $\pm$ 1.44 & - & 113.1 \\
ACL-R$^{\dagger}$             &0.00 $\pm$ 0.00 & 62.07 $\pm$ 0.51 & 8.5 & 113.1 \\
\hline
\textbf{BLIP}     &-1.05 $\pm$ 0.42 & 65.69 $\pm$ 0.87 & - & 104.78\\
\hline
\end{tabular}
\end{table}

\begin{table}[H]
\caption{Experiment Results on 20-split mini-ImageNet with ResNet-18 and AlexNet. MS is Model Size. Arch is model architecture. Results are averaged over 5 random seeds; mean $\pm$ std is reported.}
\label{resnet}
\centering
\begin{tabular}{|p{1.2cm}|p{2.0cm}|p{2.0cm}|p{2.0cm}|p{1.4cm}|}
\hline
Methods& Arch& BWT (\%) & ACC (\%) & MS (MB) \\
\hline
BLIP & AlexNet & -1.05 $\pm$ 0.42 & 65.69 $\pm$ 0.87 & 104.78 \\
BLIP & ResNet & -0.72 $\pm$ 0.46 & 65.94 $\pm$ 1.36 & 42.76 \\
\hline
\end{tabular}
\end{table}

\begin{table}[H]
\caption{Ablation study on $F_{0}$ with AlexNet and 20-split mini-ImageNet. All results are averaged over 5 random seeds. mean $\pm$ std is reported.}
\label{ablate_F_0}
\centering
\begin{tabular}{|p{3.0cm}|p{3.0cm}|p{3.0cm}|}
\hline
$F_{0}$& BWT (\%) & ACC (\%) \\
\hline
$1 \times 10^{-14}$ & -3.60 $\pm$ 0.59 & 65.23 $\pm$ 0.87  \\
$5 \times 10^{-15}$ & -3.01 $\pm$ 0.76 & 65.10 $\pm$ 1.16  \\
$1 \times 10^{-15}$ & -1.55 $\pm$ 0.53 & 65.76 $\pm$ 0.81  \\
$5 \times 10^{-16}$ & -1.05 $\pm$ 0.42 & 65.69 $\pm$ 0.87  \\
$1 \times 10^{-16}$ & -0.26 $\pm$ 0.32 & 64.78 $\pm$ 0.96  \\
$5 \times 10^{-17}$ & -0.17 $\pm$ 0.29 & 64.29 $\pm$ 0.82  \\
\hline
\end{tabular}
\end{table}

\subsection{More Visualization on Bit Freezing Process}
In this section, we provide more results on the bit freezing process visualization on different layers of our PPO agent in Fig~\ref{bit_freezing}. From the results, the similar phenomenon of more bits getting frozen as mentioned in the main text can be observed. From the heatmap, we can also see that for some parameters, no bits are frozen throughout the whole continual learning process. This means that no information gain is observed on these parameters. This phenomenon might correspond to the nature of ReLU activation function, whose gradient is 0 if the input is negative. However, it is beyond the scope of this work and we do not provide further discussion on this phenomenon.

In addition, this bit freezing process is further visualized in histogram in Fig.~\ref{bit_freezing_hist}.

\begin{figure}[H]
  \centering
  \includegraphics[width=16.5cm]{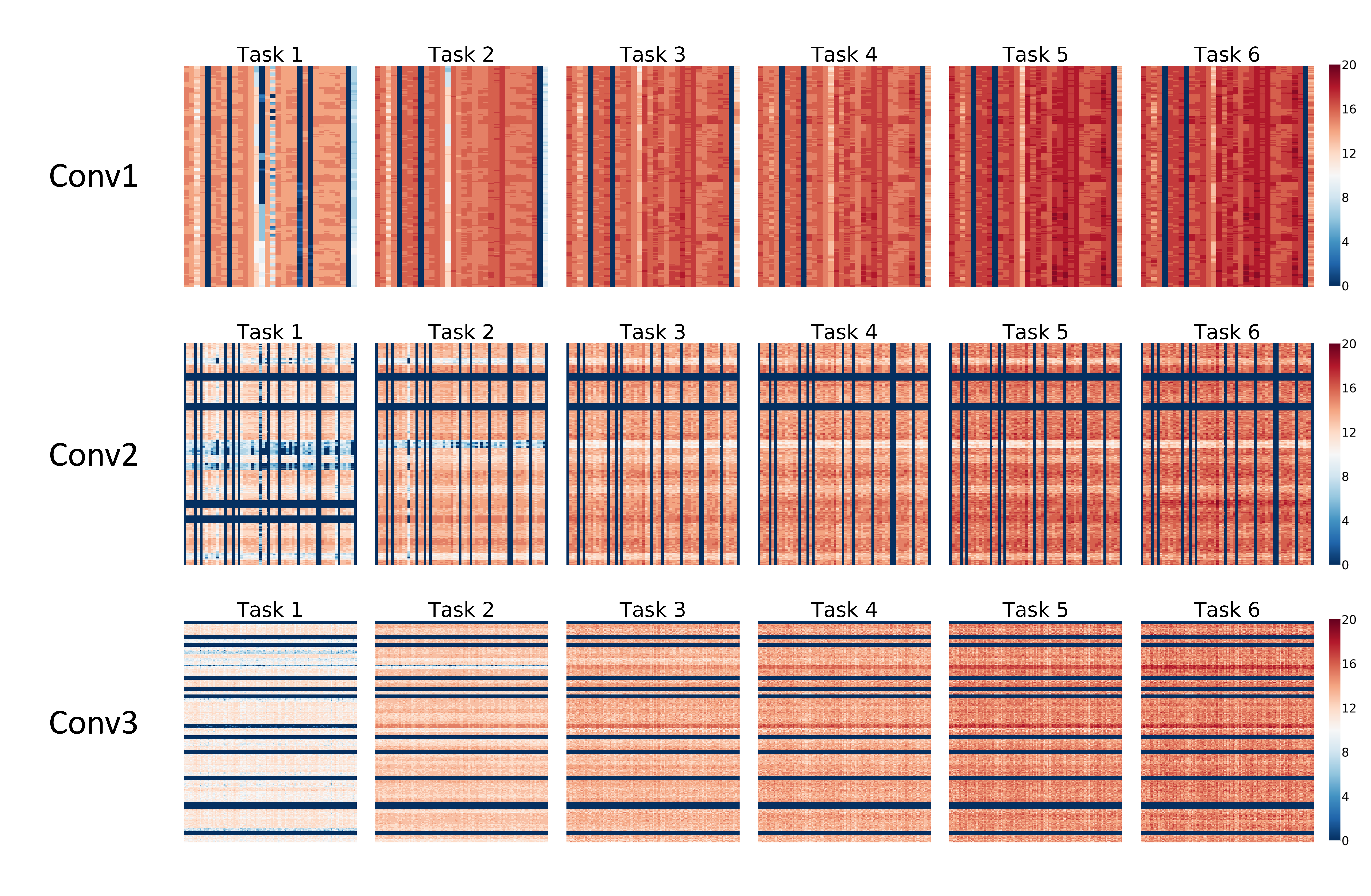}
  \caption{\textbf{Bit freezing visualization} We visualized bit freezing process as continual learning proceeds on all the convolution layers of our PPO agent. Each pixel in a heat map represents the number of frozen bits of the corresponding entry (parameter) in weight matrix of a convolution layer.
  Each parameter has a total of 20 bits.
  From darker blue to darker red denotes more bits being frozen.}
  \label{bit_freezing}
\end{figure}

\begin{figure}[H]
    \centering
    \includegraphics[width=8.5cm]{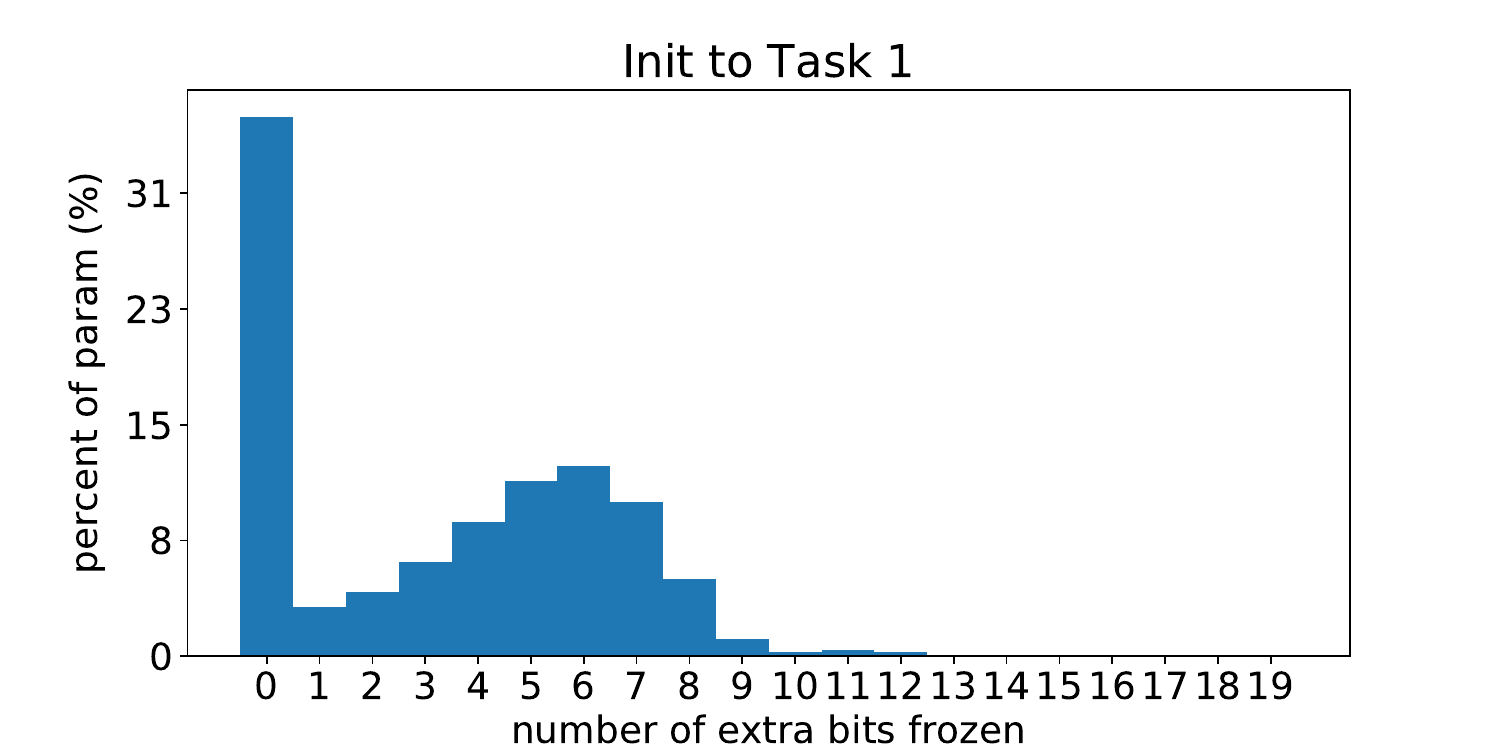}
    \includegraphics[width=8.5cm]{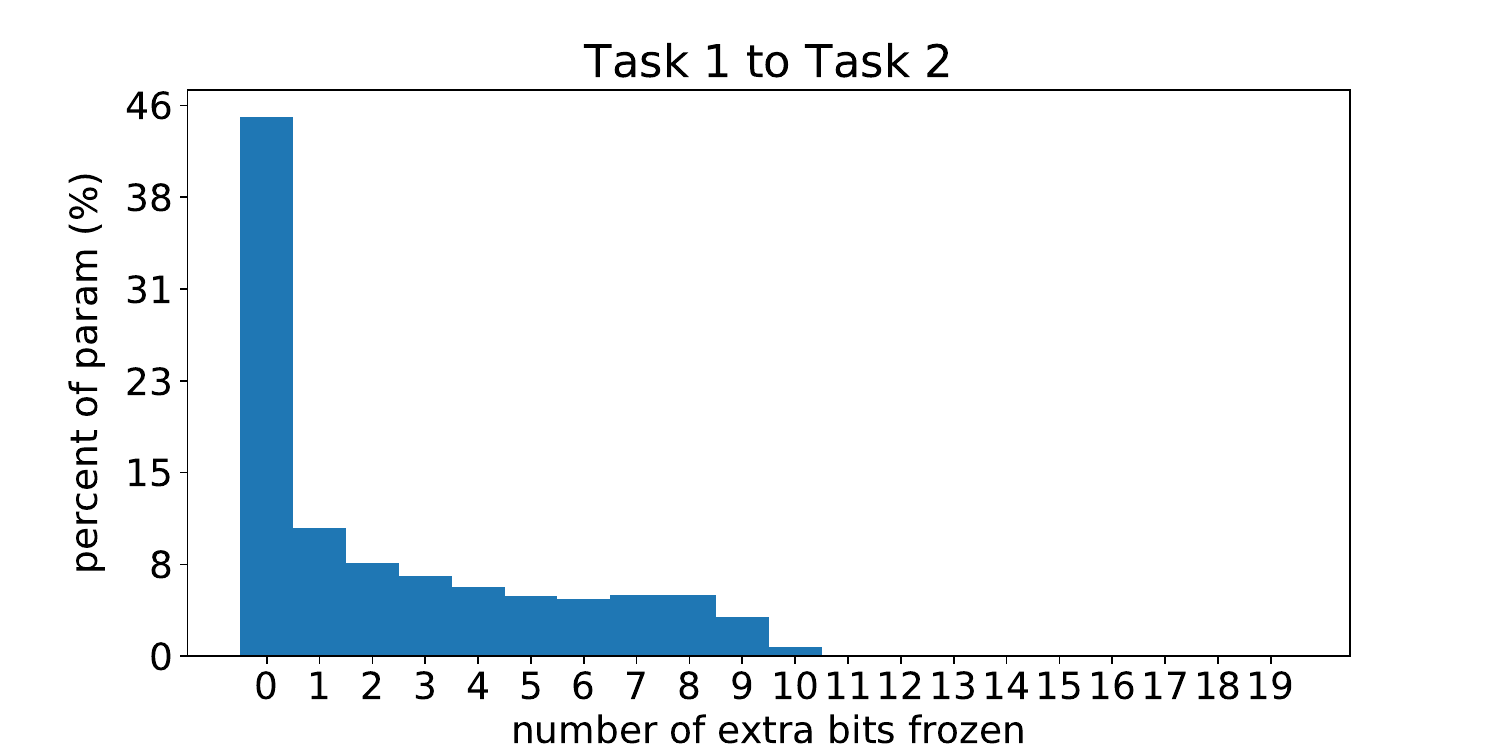}
    \includegraphics[width=8.5cm]{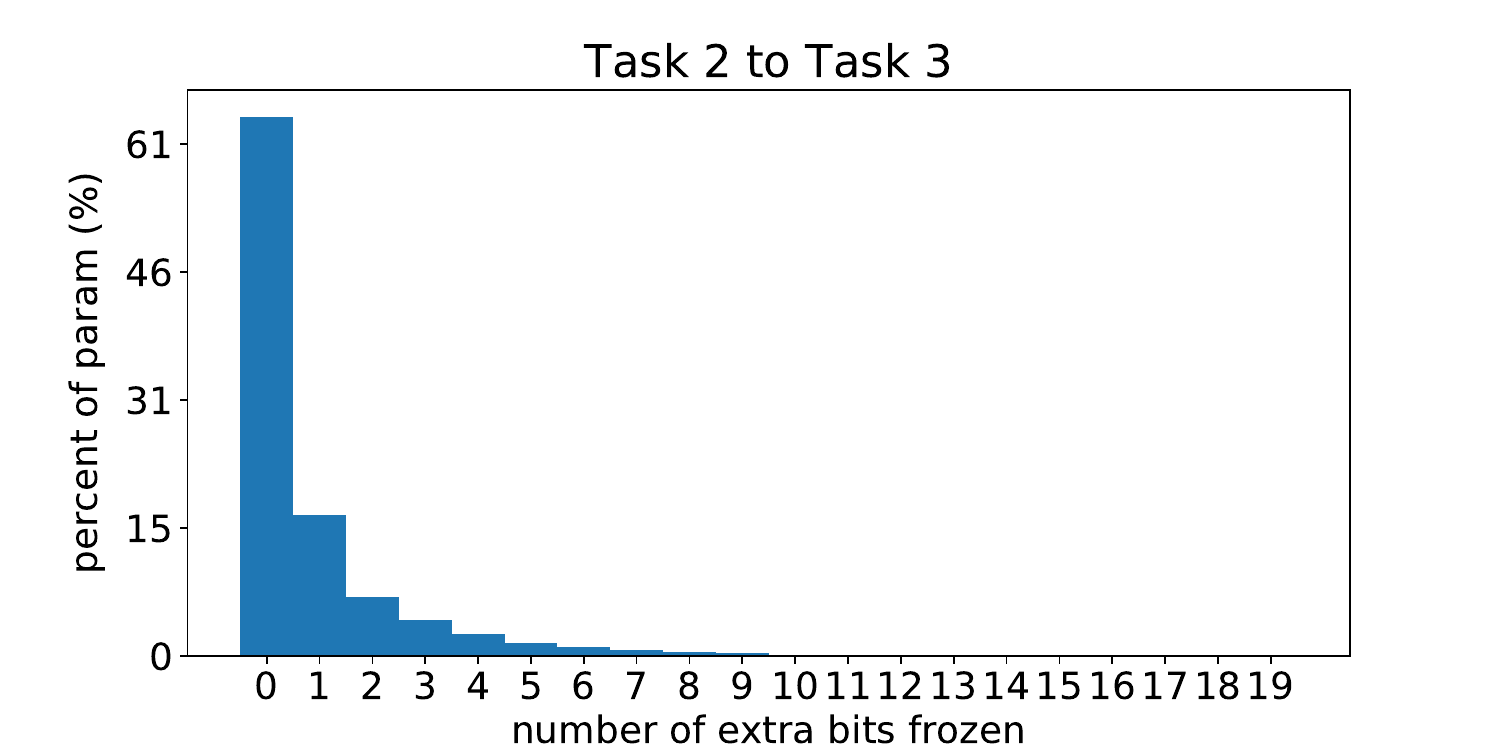}
    \includegraphics[width=8.5cm]{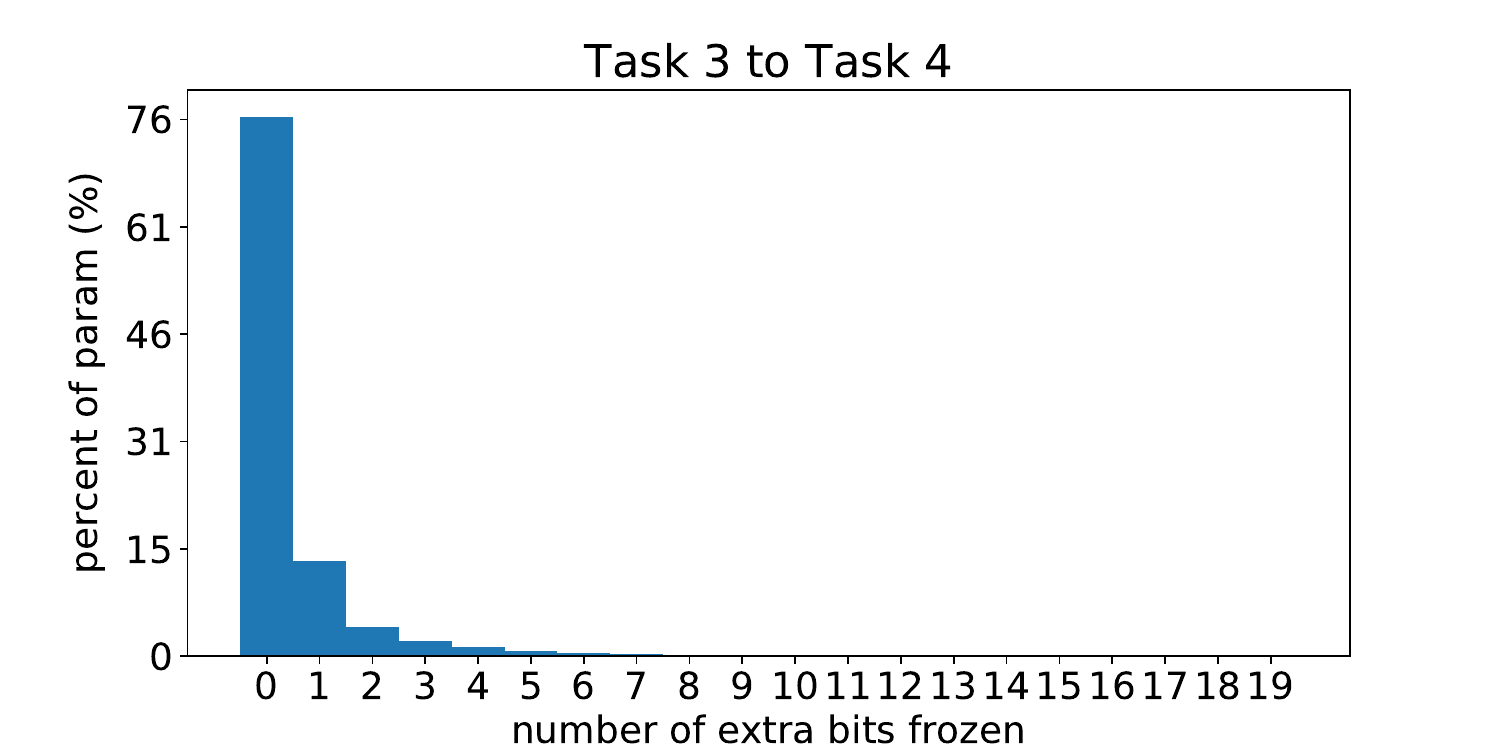}
    \includegraphics[width=8.5cm]{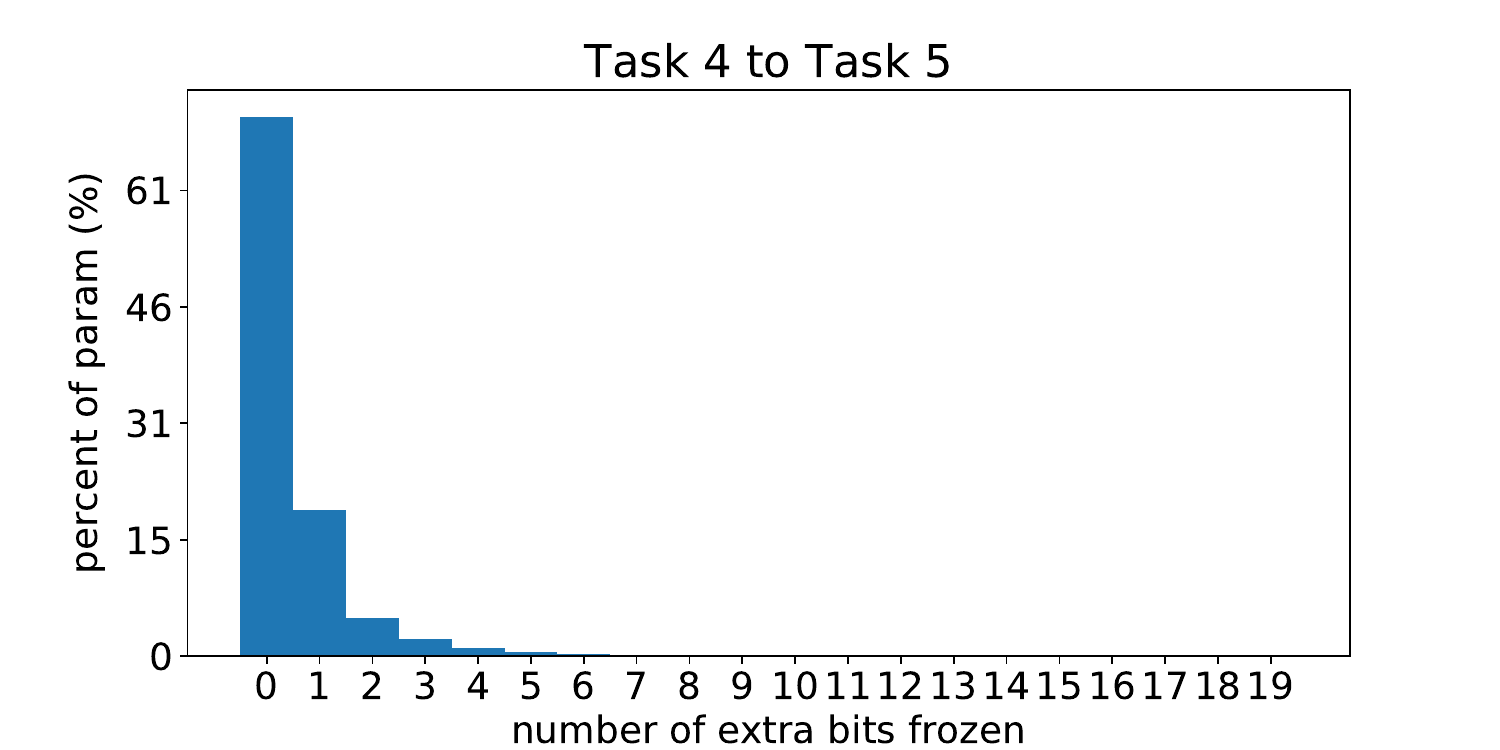}
    \includegraphics[width=8.5cm]{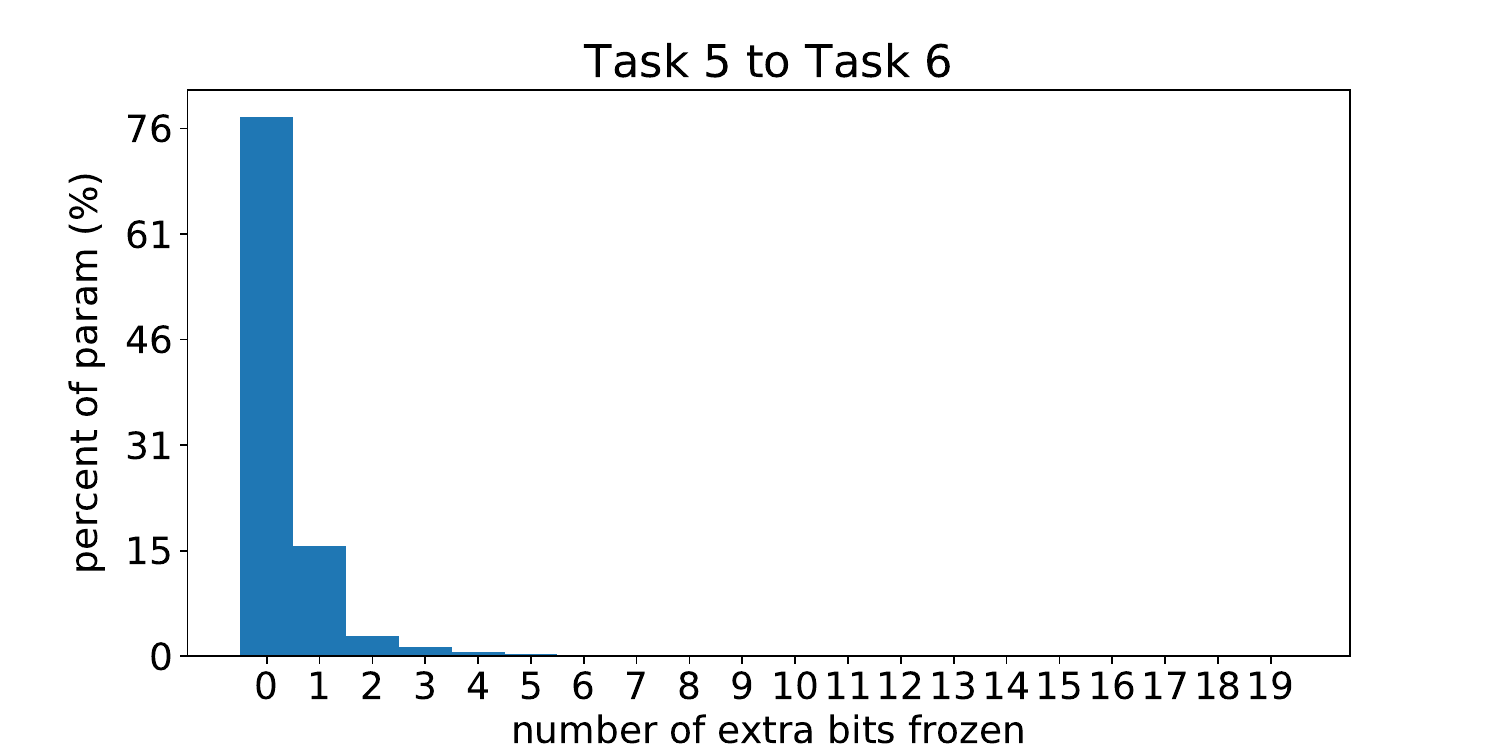}
    \caption{\textbf{Bit freezing visualization in histogram} Each histogram in this figure shows what percentage of parameters have how many extra bits frozen after learning one task.}
    \label{bit_freezing_hist}
\end{figure}

\end{document}